\definecolor{darkgreen}{rgb}{0,0.6,0}
\definecolor{red}{rgb}{0.7,0.15,0.15}
\definecolor{darkblue}{rgb}{0,0,0.6}
\DeclareMathOperator*{\argminA}{argmin}
\newtheorem{theorem}{Theorem}[section]
\newtheorem{lemma}[theorem]{Lemma}
\newtheorem{property}{Property}
\newtheorem{assumption}{Assumption}
\theoremstyle{definition}
\theoremstyle{remark}
\newcommand{\by}{\mathbf{y}}
\newcommand{\R}{\mathbb{R}^{d}}
\newcommand{\w}{\textbf{w}}
\numberwithin{equation}{section} 
\title{Online Graph-Based Change-Point Detection for High Dimensional Data}
\author{%
Yang-Wen Sun 
  \\
 Humboldt University Berlin\\
 \texttt{yangwen.sun@hu-berlin.de} \\
  \And
Katerina Papagiannouli\\
Humboldt University Berlin\\
\texttt{papagiai@hu-berlin.de} \\
\And
Vladmir Spokoiny \\
Humboldt University Berlin\\
Weierstrass Institute Berlin\\
\texttt{spokoiny@wias-berlin.de} \\
}
\begin{document}

\maketitle

\begin{abstract}
Online change-point detection (OCPD) is important for application in various areas such as finance, biology, and the Internet of Things (IoT). However, OCPD faces major challenges due to high-dimensionality, and it is still rarely studied in literature. In this paper, we propose a novel, online, graph-based, change-point detection algorithm to detect change of distribution in low- to high-dimensional data. We introduce a similarity measure, which is derived from the graph-spanning ratio, to test statistically if a change occurs. Through numerical study using artificial online datasets, our data-driven approach demonstrates high detection power for high-dimensional data, while the false alarm rate (type I error) is controlled at a nominal significant level. In particular, our graph-spanning approach has desirable power with small and multiple scanning window, which allows timely detection of change-point in the online setting. 
\end{abstract}

\section{Introduction}

Change-point detection has been widely applied in various fields such as finance \cite{spokoiny2009multiscale}, biology \cite{chen2011parametric}, and the Internet of Things (IoT) \cite{aminikhanghahi2018real}. Nowadays, as sensing and communication technologies evolves, high-dimensional data are generated seamlessly in various fields. Hence, high dimensionality, online (timely), and algorithm robustness constitute major challenges to modern change-point detection problem. \par

Statistically, a change-point can be characterized as a point in sequential data $y_i, i=1, 2, \dots$, where the probability distribution prior- and after- the sequential data are different, that is $\exists \tau > 0,  y_i \sim F_0$, for $ i< \tau$, otherwise $y_i \sim F_1$. Traditional parametric approaches have limitation for the high-dimensional data as the number of parameters to be estimated surpass the number of observations available and also assumptions needed for the distribution of each individual dimension are difficult as the underlying distributions are normally highly context specific \cite{siegmund2011detecting}. On the other hand, for the nonparametric approaches such as kernel-based method \cite{harchaoui2009kernel}, the increasing dimension makes the selection of kernel function and the bandwidth a complicated optimization process. 

To reduce the complexity of CPD problem due to dimensionality, the graph-based CPD is devised. A similarity measure is introduced in the graph-based CPD.  This similarity measure transforms the dimensional data into dimensionless metrics, which, firstly, alleviates the curse of dimensionality and secondly, is a test statistic to compare the distribution statistically. Further two-sample test is carried for hypothesis testing. We bring the graph-based CPD online to perform detection in real-time setting, while offline detection retrospectively detect changes in a closed dataset. Multiple examination windows incept the incoming data for a timely detection. Our proposed graphed-based change-point detection framework allows us to detect changes online while maintains accuracy with small scanning window. In this paper, we entails the graph-based CPD algorithms, provide theoretical base for the algorithm, and then give empirical validation of these results.

\subsection{Related work}\label{subsec:Related}
Graph-based CPD methods are two-sample tests based on various types of graphs representing the similarity between observations,  which are first proposed by Friedman and Rafsky \cite{friedman1979multivariate} (1979) using minimum-spanning tree (MST) graph. Also, Rosenbaum \cite{rosenbaum2005exact} (2005) propose another test based on the minimum-distance pairing (MDP) using the rank of the distance within the pairs, which is thus restricted to MDP graph. Recently, Chen and Zhang \cite{chen2015graph} (2015), utilize MST and MDP graph representations onto the data, and construct a test statistic based on counting the number of edges connecting points from different groups. They demonstrate that graph-based CPD has better detection power at high-dimensional data compared to parametric methods, such as Hotelling’s $T^2$ and generalized likelihood ratio test. Chen and Zheng develop an algorithm to count the edges between groups before and after the potential change-point. However, this pioneer CPD graph method, firstly, works only for offline detection, which is not sufficient in fields where an immediate reaction is needed when a change-point is detected. Secondly, its detection power is comparably not sensible to the variance change and is limited by the size of the dataset. Our proposed method contributes in filling the aforementioned gaps.  

\section{Online change-point detection based on graph similarity} 
We observe $\{\by_i\}, i=1, 2, \dots$, where $y_i \in \mathds{R}^d$ with number of observation not fixed as we are receiving data online. The change-point problem can be formulated as hypothesis testing problem, that is, to test the null hypothesis
\begin{equation}\label{H0}
H_{0}:  \quad \by_i \sim F_0, \quad i=1, 2, \dots. 
\end{equation}
against the single change-point alternative
\begin{equation}  \label{H1}
	H_{1}: \quad \exists ~ 1 < \tau,\ \by_i \sim \left\{ \begin{array}{ll} F_1, & i> \tau \\ F_0, & \text{otherwise}, \end{array} \right.
\end{equation}
where $F_0$ and $F_1$ are two probability measures that differ on a set of non-zero measure. $\tau$ refers to the change-point and data end point is not written here since we observe the data sequentially.

\subsection{Graph similarity }\label{subsec:graph_simi}
We consider an undirected graph $G=(V,E)$, in which vertices $V=[n]$ represent a block of $n$ consecutive observations from the sequential data, and edges $E$ indicates the connectivity of two end nodes, and edge weights $W_{i,j}$ as the squared Euclidean distance between vertices. Possible choices of graphs are complete graph, minimum-spanning tree (MST), minimum-distance pairing (MDP), and nearest-neighbor graph (NNG) \cite{chen2017new}. 

\subsection{Test statistic for change-point detection}
As dimension $d$ of observation increases, it becomes challenging to compare distribution $F_0$ to $F_1$ as mentioned in (\ref{H1}). Instead, to test the null $H_0$ (\ref{H0}) versus the single change-point alternative $H_1$ (\ref{H1}), we devise the test based on a similarity measure, i.e. test statistic, derived from the graph structure from the sample space of observations $\{\by_i\}$.  For any candidate value $ t $ of the change-point $ \tau $, we set up an online scanning window  $\{\by_i:~i=1,\dots, n\}$, where $ n \in \mathbb{N}$, and derive the test statistic based on the graph similarity of the data. In each scanning window, we divide the observations into two equally large groups: observations come before $t$ and observations that come after $t$, i.e. the potential change-point. We construct three graphs, as described in Section \ref{subsec:graph_simi}, based on segments from the scanning window. Let $n \in \mathbb{N}, n$ is even and $G$ be the graph on  $\{\by_i:~i=1,\dots, n \}$, $G_1 $ the graph be the graph on  $\{\by_i:~i=1,\dots, n/2\}$, and $ G_2 $ be the graph on  $\{\by_i:~i=n/2+1,\dots, n\}$. Then, we introduce the following Graph Spanning Ratio (GSR) test statistic for the graphical mean:
\begin{equation} \label{eq:T_mu}
T_{\mu,n}= \frac{dG}{dG_1+dG_2}
\end{equation}

and test statistic for the graphical variance:
\begin{equation}\label{eq:T_var}
  T_{\sigma,n}= \frac{dG_1}{dG_2}+\frac{dG_2}{dG_1} 
\end{equation}  
We denote the sum of edge weights for graph $G$ by $dG$, that is $ d{G}= \sum_{\{ij\} \in E_{G}} W_{ij} $, where  $W_{ij}  = \|\by_{i} -\by_{j}\|^{2}$, similarly for $dG_1, dG_2$. The similarity graphs $ G, G_1, G_2 $ are constructed with the square Euclidean distance between the nodes using selected graph type. Note that spanning ratio of the graphical mean, $T_{\mu,n}$ is devised in such way that it increases when there is a change in mean of the distribution $F_0, F_1$, as an example shown in Figure \ref{fig:MSTs}. Similarly, when there is a change in variance, then we can see that the spanning ratio of graphical variance,$T_{\sigma,n}$, increases as the spanning distances of graph $G_1$ and $G_2$ varies.

\begin{figure}
  \centering
  \textbf{a}
  \includegraphics[height=.21\textheight]{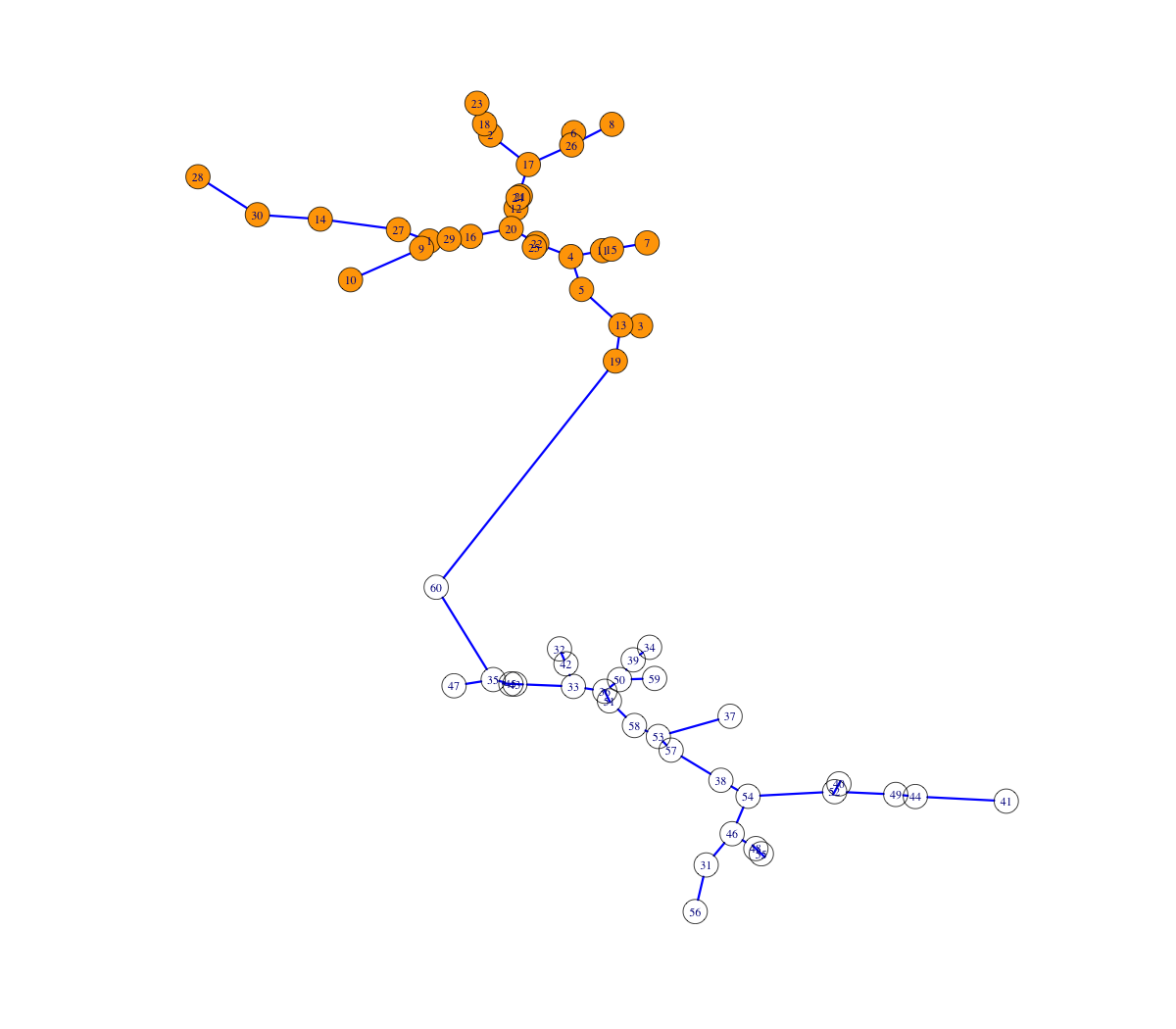}
  \textbf{b}
  \includegraphics[height=.21\textheight]{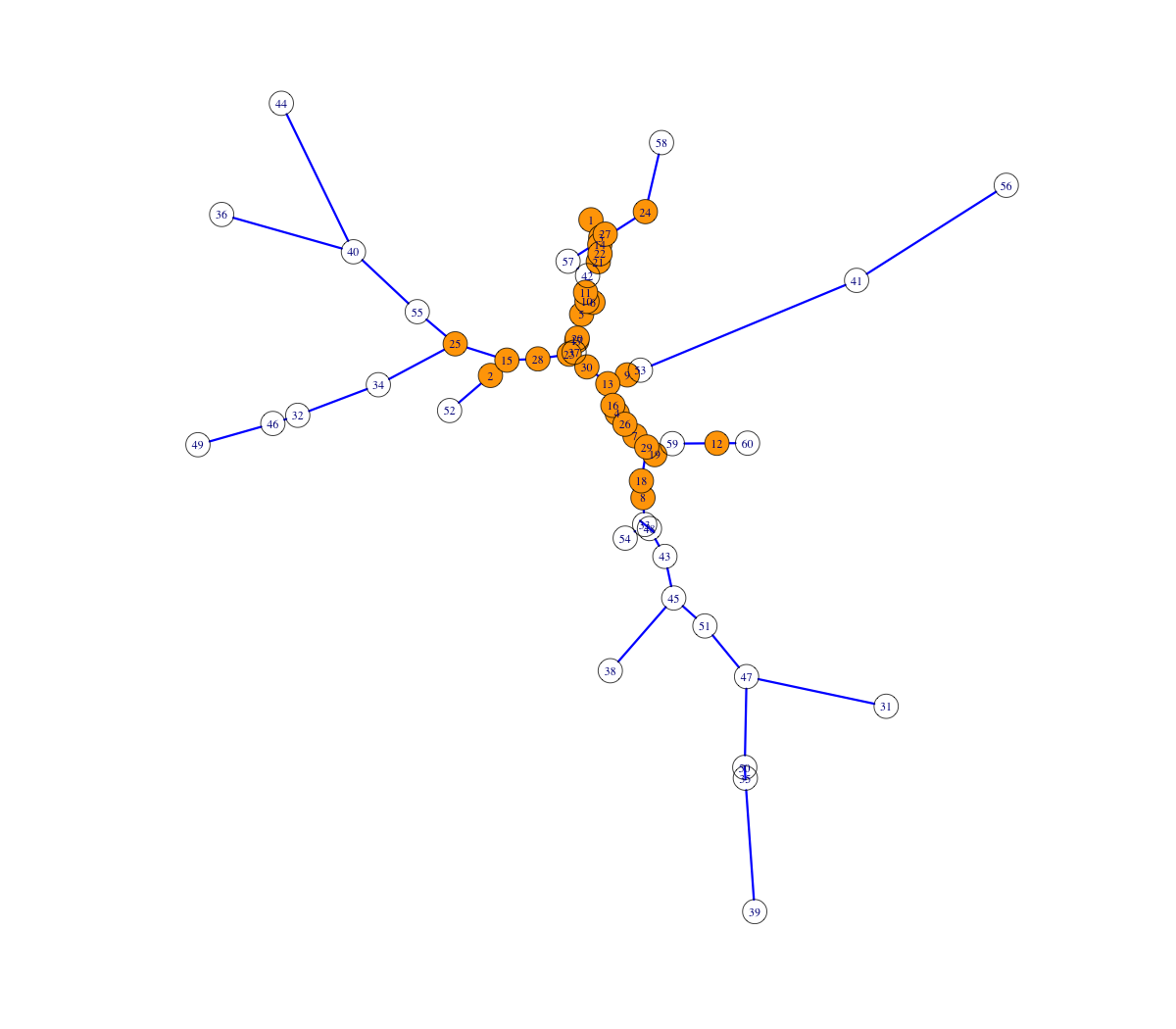}\setlength{\belowcaptionskip}{0pt}
  \caption{Graph representation of a two-dimensional sequential data. Minimum spanning tree graphs are constructed from 60 i.i.d. normal distributed observations with first 30 observations (in orange color) from standard normal, the second 30 observations (in white color) with (a) change in mean, (b) change in variance. }
 \label{fig:MSTs}
\end{figure}

Two-sample hypothesis tests with test statistics $T_{\mu,n}$ and $T_{\sigma,n}$ are used to detect change in graphical mean and variance. Given a significant level $\alpha$, we reject $H_0$  if
                  \[ T_{\mu,n} > \rho_{\alpha,\mu,n} \]
                  where the critical point to test the change of graphical mean
                  \[ \rho_{\alpha,\mu,n}= \argminA_{\rho} \{\mathds{P}\big( T_{\mu,n}  \geq \rho \big) \leq \alpha \} \]
or 
                  \[ T_{\sigma,n} > \rho_{\alpha,\sigma,n}\]
                  where the critical point to test the change of graphical variance
                  \[ \rho_{\alpha,\sigma,n} = \argminA_{\rho} \{\mathds{P}\big( T_{\sigma,n}  \geq \rho \big) \leq \alpha \} \]

The critical values $ \rho_{\alpha,\mu,n}, \rho_{\alpha,\sigma,n}$ under the null distribution $H_0$ can be approximate by the limiting behavior of the test statistics. In practice, the limit distribution often neither can be  expressed analytically, nor the convergent rate to limit distribution is fast. For sequential data, the limit distribution could in the case that small sample dependency structure would not sufficient to take into account \cite{kirch2008bootstrapping}. As a result, we study the critical value $\rho_{\alpha,\mu,n}, \rho_{\alpha,\sigma,n}$ using permutation method. 

Under the assumption that observations are independent and identically distributed from null distribution $H_0$, the joint distribution of a block of $n$ observations, $\{\by_i:~i=1,\dots, n\}$ is the same when we permute the order of the sequential data. In the permutation method, we define the null distribution of GSR test statistics to be the permutation distribution which encompass $ n! $ possible permutations of  $\{\by_i:~i=1,\dots,n\}$. For each permutation we perform a GSR test statistics. Through the permutation procedures, we recover the empirical distribution for the GSR test statistics. What is the threshold of the GSR test statistics (\ref{eq:T_mu}) need to be, to draw sufficient evidence against the null hypothesis (\ref{H0}). In that sense, we want to find the critical value from the empirical distribution of the test statistics under (\ref{H0}), that is

\begin{equation}\label{eq:rhoM}
\rho^b_{\alpha,\mu,n} = \argminA_\rho \{\mathds{P}^b\big( {T_{\mu,n}^b}  \geq \rho \big) \leq \alpha \} 
\end{equation}
\begin{equation}\label{eq:rhoV}
\rho^b_{\alpha,\sigma,n} = \argminA_\rho \{\mathds{P}^b\big( {T_{\sigma,n}^b}  \geq \rho \big) \leq \alpha \} 
\end{equation}

where  $\rho^b_{\alpha,\mu,n}$ and  $\rho^b_{\alpha,\sigma,n}$ denote the critical values for test statistics $T_{\mu,n}$ and $T_{\sigma,n}$ and $\mathds{P}^b$ is the probability measure under the empirical distribution. Next, we propose an algorithm for static GSR change-point detection, StaticGSR($Y_{0}, Y_{}, b, \alpha$).  We assume observations in $Y_{0}$ are i.i.d. and are no change-point. Based on the permutation procedure, it computes the critical value of the GSR test statistic (\ref{eq:T_mu}) for each window length from the training data $Y_{0}$. The empirical critical value calculated from this training observations will then apply to further change-point detection. One thing to mention is that this method detects whether a change-point exist in the middle of the new observations.

\begin{algorithm}
	\caption{Static graph-spanning-ratio change-point detector: StaticGSR($Y_{0}, Y_{}, k, \alpha$)}\label{algo.offline}
	\begin{algorithmic}
				\State \textbf{Input:} data $Y_{0}$ of size $n$, permutations $ k $\\
				\Comment For each window length $n$, we estimate the empirical distribution of test statistics by permutation
				\For {$ i = 1, \dots, k $} 
					\State Permute $Y_{1}$	
					\State Compute $ {T_{\mu,n}}^{i} = \frac{dG}{dG_1+dG_2}, {T_{\sigma,n}}^{i} = \frac{dG_1}{dG_2}+\frac{dG_2}{dG_1} $
			     \EndFor
			     \State Compute $ \rho^b_{\alpha,\mu,n}, \rho^b_{\alpha,\sigma,n} $ \Comment critical value
	\end{algorithmic}\par
	
     \begin{algorithmic}
      \State \textbf{Input:} data $Y_{}$ of size $n$       
           \State Compute  $ {T_{\mu,n}} = \frac{dG}{dG_1+dG_2}, {T_{\sigma,n}} = \frac{dG_1}{dG_2}+\frac{dG_2}{dG_1} $
			\If {${T_{\mu,n}}  \geq \rho^b_{\alpha,\mu,n}  ||  {T_{\sigma,n}}  \geq \rho^b_{\alpha,\sigma,n} $} 
				\State Reject $ H_{0} $ \Comment CP detected
			\Else 	
				\State Accept $ H_{0} $ \Comment No CP detected
			\EndIf
	  \end{algorithmic}
\end{algorithm}

\subsection{Online graph change-point detection}
In the online setting, we receive observation consecutively. Hence, we need to consider online calibration of critical value for change-point detection. Under the i.i.d and no change-point assumption, we take in the first incoming $N$ observation $y_1,...,y_N$ as our training sample. Here we define a zone $A_n=\{n/2-1, ..., N-n/2\}$, where $n$ is the size of the scanning window. For online change-point detection, we need to consider multiple windows detection which later would allows us to capture the change-point in a timely manner. Assume observations are i.i.d. Similarly to the static CPD procedure mentioned above, we apply permutation on the training sample to calculate the empirical critical value for GSR test. In this procedure, we permute the sample $b$ times, and based on which we construct the permutation test statistics. Here, we write the test statistics as, for $t \in A_n $,
		\[  {T_{\mu,n}^b}(t) = \frac{{dG}^b}{{dG_1}^b+{dG_2}^b} \]
		\[  {T_{\sigma,n}^b}(t)  = \frac{{dG_1}^b}{{dG_2}^b}+ \frac{{dG_2}^b}{{dG_1}^b} \]
	where 
	     ${dG}^b, {dG_1}^b, {dG_2}^b$ are the spanning distances from Graph $G, G_1, G_2$. For $t \in A_n $, Graph $G$ contains data points from $y_{t-n/2+1}, \dots,y_{t+n/2}$, Graph $G_1$ contains data points from $y_{t-n/2+1}, \dots, y_t$, and Graph $G_1$ contains data points from $y_{t+1}, \dots,y_{t+n/2}$. 
 To find the empirical critical value, for each permutation, we calculate
		\[ T_{\mu,n}^b:= \max_{t \in A_n} T_{\mu,n}^b(t) \]
	     \[ T_{\sigma,n}^b:= \max_{t \in A_n} T_{\sigma,n}^b(t)\]
		
For each window length $n$, we compute the quantile function 
         \[ {Z}_{\mu,n}^b(z) :=inf\{ x: \mathds{P}^b	\big( T_{\mu,n}^b  \geq x \big) \leq z \}\]
         \[ {Z}_{\sigma,n}^b(z) :=inf\{ x: \mathds{P}^b	\big( T_{\sigma,n}^b  \geq x \big) \leq z \}\]
For each window length $n$, the critical value based on the permutation method is specified as in (\ref{eq:rhoM}) and (\ref{eq:rhoV}), which gives
    \[  \rho^b_{\alpha,\mu,n}={Z}_{\mu,n}^b(\alpha) \]
    \[  \rho^b_{\alpha,\sigma,n}={Z}_{\sigma,n}^b(\alpha) \]

Permutation calibration is used for the online setting to control the false alarm rate \cite{avanesov2018change}. To lower the false alarm rate, we calibrate for all window length $\mathfrak{N}$
    \[  \alpha^* := sup\{ z : \exists n \in \mathfrak{N}, i \in\{\mu, \sigma \}; \mathds{P}^b\big(T_{i,n}^b > {Z}_{i,n}^b(z) \big) <  \alpha\} \]    

\begin{algorithm}
	\caption{Online graph change-point detector: OnlineGSR($Y, N, n, k,\alpha$)}\label{algo.online}
			\begin{algorithmic}
				\State \textbf{Input:} initial $N$ observations from data $Y$, window length $n$, permutations $ k$
				\For {$ i = 1, \dots, k $}
					\State Permute $Y_1$	
					\For {$ t = n/2, \dots, N-n/2$}
					\State Compute $ T_{\mu,n}(t) = \frac{dG}{dG_1+dG_2}, T_{\sigma,n}(t) = \frac{dG_1}{dG_2}+\frac{dG_2}{dG_1} $
					\EndFor \\
					Compute $ T_{\mu,n}^{i} = \max\limits_{t} T_{\mu,n}(t),  T_{\sigma,n}^{i} = \max\limits_{t} T_{\sigma,n}(t)$     
			     \EndFor
			     \State Compute $ \rho_{\alpha,\mu,n}, \rho_{\alpha,\sigma,n} $ \Comment critical value for window length $n$
	       \end{algorithmic} \par       
           \begin{algorithmic}
           \State \textbf{Input:} data $Y$ after $N$ observations
           \State For each window length $n$, compute  $ T_{\mu,n} = \frac{dG}{dG_1+dG_2}, T_{\sigma,n} = \frac{dG_1}{dG_2}+\frac{dG_2}{dG_1} $ 
			\If {For any window length $n$, $T_{\mu,n} \geq \rho_{\alpha,\mu,n}  ||  T_{\sigma,n} \geq \rho_{\alpha,\sigma,n} $} 
				\State Reject $ H_{0} $ \Comment CP detected							
			\Else 
				\State Accept $ H_{0} $ \Comment No CP detected
			\EndIf
		\State \textbf{Output:} change-point not detected / detected at $\frac{n}{2}$ before last observation.
			\end{algorithmic}
\end{algorithm}

\section{Theoretical properties of the GSR test statistics }
	We assume $ N$ is the number of observations and $ n $ is the scanning window length, where $ n < N $ and $ n $ is an even number.
	\begin{assumption}\label{gr}
		$ G$ is a complete graph with $ n $ nodes. By this we mean that the graph $ G $ is undirected and every pair of distinct vertices is connected by a unique edge.
	\end{assumption}
	
	\begin{assumption}\label{nodes}
		The nodes of  graph  $ G $, namely $ \by_i$, are i.i.d. random variables normally distributed, $ \by_i \sim\mathcal{N}\left(\mu, \sigma^{2} I \right)$ and $ \by_i \in \R$.
	\end{assumption}

\begin{assumption}\label{dist}
	The nodes are paired with the euclidean distance in $ \R $. 
\end{assumption}
	Next, we show some theoretical properties concerning the distribution of the GSR test statistics  and the spanning distance of the graph.
	\begin{lemma} \label{spdidi}
		Let Assumptions \ref{gr}-\ref{dist} hold. Then, the distance spanned by $ G $ 
		\[ 
		dG = \sum_{i<j} \|\by_i, -\by_j\|^{2} = \sum_{i= 1}^{m} \lambda_{i} U_{i}^{2}
		 \]
		is  a linear combination of independent chi-square distributions, where $\|\cdot\|$ is the euclidean distance in $ \R $, $U $ follows a multivariate standard normal distribution. 
	\end{lemma}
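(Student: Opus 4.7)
The plan is to realise $dG$ as a quadratic form in a Gaussian vector and then diagonalise it by the spectral theorem, which automatically expresses it as a weighted sum of independent squared standard normals. The pivotal algebraic identity is
\[ \sum_{i<j}\|\by_i-\by_j\|^{2} \;=\; n\sum_{i=1}^{n}\|\by_i-\bar{\by}\|^{2}, \qquad \bar{\by}=\tfrac{1}{n}\sum_{i=1}^{n}\by_i, \]
which follows from expanding the left-hand side around $\bar{\by}$ and noting that the cross terms cancel because $\sum_i(\by_i-\bar{\by})=0$. Stacking the observations into $Y=(\by_1^{\top},\dots,\by_n^{\top})^{\top}\in\mathbb{R}^{nd}$, the right-hand side becomes the quadratic form $dG = Y^{\top}A\,Y$ with $A = n\,(P\otimes I_d)$, where $P = I_n - \tfrac{1}{n}\mathbf{1}_n\mathbf{1}_n^{\top}$ is the usual $n\times n$ centering matrix.

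By Assumption \ref{nodes}, $Y\sim\mathcal{N}(\mathbf{1}_n\otimes\mu,\,\sigma^{2}I_{nd})$. The next step is to remove the mean: since $P\mathbf{1}_n=0$, one has $A(\mathbf{1}_n\otimes\mu)=0$, so translating $Y$ by $\mathbf{1}_n\otimes\mu$ leaves $dG$ unchanged. I may therefore write $Y - \mathbf{1}_n\otimes\mu = \sigma Z$ with $Z\sim\mathcal{N}(0,I_{nd})$, yielding $dG = \sigma^{2}Z^{\top}AZ$.

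Finally, because $P$ is an orthogonal projection of rank $n-1$, the matrix $A$ is symmetric and positive semidefinite, with eigenvalues equal to $n$ (multiplicity $(n-1)d$) and $0$ (multiplicity $d$). Diagonalise $A = Q\,\mathrm{diag}(\lambda_1,\dots,\lambda_{nd})\,Q^{\top}$ with $Q$ orthogonal, and set $U = Q^{\top}Z$; by rotational invariance of the standard Gaussian, $U\sim\mathcal{N}(0,I_{nd})$, so
\[ dG \;=\; \sigma^{2}\sum_{i=1}^{nd}\lambda_i\,U_i^{2}, \]
which is the announced linear combination of independent $\chi_1^{2}$ variables (after absorbing $\sigma^{2}$ into the coefficients). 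No step is genuinely hard; the main things to handle carefully will be the Kronecker-product eigenstructure of $A$ and the translation-invariance argument that legitimately reduces the problem to the mean-zero case.
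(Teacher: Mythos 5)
Your proof is correct, but it takes a genuinely different route from the paper's. The paper vectorizes the $\binom{n}{2}$ pairwise differences $\w_{i,j}=\by_i-\by_j$ into one long Gaussian vector $W$, writes $dG=W^{\top}W=Z^{\top}\Sigma^{1/2}\Sigma^{1/2}Z$ with $Z=\Sigma^{-1/2}W$, and diagonalizes the covariance $\Sigma$ of $W$ to obtain $\sum_i\lambda_iU_i^2$; the $\lambda_i$ are left as unidentified eigenvalues of $\Sigma$. You instead use the centering identity $\sum_{i<j}\|\by_i-\by_j\|^2=n\sum_i\|\by_i-\bar{\by}\|^2$ to write $dG$ as a quadratic form $Y^{\top}AY$ in the \emph{original} observation vector, with $A=n(P\otimes I_d)$ built from the rank-$(n-1)$ centering projection $P$. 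This buys you two things the paper's argument does not deliver: (i) the eigenvalues are computed explicitly ($n$ with multiplicity $(n-1)d$, zero otherwise), so you get in one stroke the exact statement that $dG/(n\sigma^{2})\sim\chi^{2}_{(n-1)d}$ — which the paper has to assert separately as a ``Property'' and re-derive inside the proof of Lemma \ref{testmu}; and (ii) you sidestep a technical defect of the paper's route, namely that the covariance $\Sigma$ of the stacked difference vector $W$ is singular (the differences are linearly dependent), so $\Sigma^{-1/2}$ as written does not exist and the reduction $Z=\Sigma^{-1/2}W$ needs a pseudo-inverse or a restriction to the range of $\Sigma$. Your mean-removal step via $A(\mathbf{1}_n\otimes\mu)=0$ and the Kronecker eigenstructure are both handled correctly.
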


\begin{property}
The distance spanned by $ G $ follows chi-square distribution with degrees of freedom $ (m-1)d $.
\end{property}
Further, we give the distribution ot the GSR mean test statistic $T_{\mu,n}$ (\ref{eq:T_mu}).
\begin{lemma}\label{testmu}
Let Assumptions \ref{gr} - \ref{dist} hold. Then, the GSR test statistics for the graphical mean follows Fisher distribution. Precisely, 
\[
T_{\mu, n} = \frac{dG}{dG_{1} + dG_{2}}\sim F(d_{1}, d_{2}),
\]
where $ dG, dG_{1}, dG_{2}$ are the distances spanned by a fully-connected graph without change-point, before and after a potential change-point respectively. $ d_{1}, d_{2} $ are the degrees of freedom.
	\end{lemma}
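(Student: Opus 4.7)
The plan is to recast $T_{\mu,n}$ as a classical F-ratio via a one-way ANOVA decomposition of the Gaussian sample. The starting point is the elementary identity
\[
\sum_{i<j} \|\by_i - \by_j\|^2 \;=\; n \sum_{i=1}^n \|\by_i - \bar{\by}\|^2,
\]
applied to the full window and separately to each half; this rewrites $dG$, $dG_1$, $dG_2$ as (scaled) centered sums of squares for normal samples of sizes $n$, $n/2$, $n/2$, to which standard Gaussian quadratic-form results apply.

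Next I would perform the between/within split of the total sum of squares to obtain
\[
dG \;=\; 2\,dG_1 \;+\; 2\,dG_2 \;+\; \tfrac{n^{2}}{4}\,\|\bar{\by}_1 - \bar{\by}_2\|^2,
\]
where $\bar{\by}_1,\bar{\by}_2$ are the two half-window sample means. Dividing by $dG_1+dG_2$ reduces the lemma to identifying the cross term $\tfrac{n^{2}}{4}\|\bar{\by}_1-\bar{\by}_2\|^2$ and the pooled within-group sum $dG_1+dG_2$ as a scaled independent pair of chi-squares.

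For this distributional step I would invoke Cochran's theorem. Under Assumption~\ref{nodes}, $\bar{\by}_1 - \bar{\by}_2 \sim \mathcal{N}(0,(4\sigma^{2}/n)I_d)$ is independent of the within-group deviations, so $(n/(4\sigma^{2}))\|\bar{\by}_1-\bar{\by}_2\|^2 \sim \chi_d^{2}$ is independent of $(2/(n\sigma^{2}))(dG_1+dG_2) \sim \chi_{(n-2)d}^{2}$. Rescaling each by its degrees of freedom yields an $F\!\left(d,\,(n-2)d\right)$ variable; the unknown $\mu$ drops out through centering and $\sigma^{2}$ cancels in the ratio, so the distribution is completely specified by $n$ and $d$.

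The principal obstacle is that $dG$ and $dG_1+dG_2$ are plainly \emph{not} independent as random variables, so a direct appeal to the chi-square marginals recorded in Lemma~\ref{spdidi} and the subsequent Property cannot by itself deliver an F-distribution; the decomposition above is precisely what buys the independent chi-square in the numerator. A secondary subtlety to flag is that $T_{\mu,n}$ is strictly equal to $2 + \tfrac{2}{n-2}\,F(d,(n-2)d)$, i.e.\ an affine transform of an F-variate rather than an F-variate itself, so in writing the full proof I would either interpret the lemma's statement modulo this deterministic rescaling or adjust the definition of the test statistic by absorbing the shift and scale.
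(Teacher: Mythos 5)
Your proof is correct and in fact more complete than the paper's own argument, and it takes a genuinely different (cleaner) route. The paper's proof expands $dG=\tfrac12\sum_{i,j}\|\by_i-\by_j\|^2$ into $\tfrac12\bigl(n\sum_i\|\by_i\|^2-\|\sum_i\by_i\|^2\bigr)$, introduces $\xi_-,\xi_+,S$, writes $T_{\mu,n}=(S-\|\xi_-+\xi_+\|^2)/(S-\|\xi_-\|^2-\|\xi_+\|^2)$, and then asserts the Fisher conclusion after noting $S=\|\xi_-\|^2+\|\xi_+\|^2+S_{rem}$; it never proves independence of numerator and denominator, never identifies $d_1,d_2$, and its displayed ratio drops a factor of $\tfrac12$ on the $\sum\|\by_i\|^2$ term of the denominator (the half-windows have length $n/2$, not $n$). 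Your route supplies exactly the missing ingredients: the identity $\sum_{i<j}\|\by_i-\by_j\|^2=n\sum_i\|\by_i-\bar{\by}\|^2$, the between/within split $dG=2\,dG_1+2\,dG_2+\tfrac{n^2}{4}\|\bar{\by}_1-\bar{\by}_2\|^2$, and Cochran's theorem giving an independent pair $\chi^2_d$ (between) and $\chi^2_{(n-2)d}$ (pooled within), from which the degrees of freedom are explicit and $\sigma^2$ and $\mu$ visibly cancel. Your closing caveat is also the honest reading of the lemma: $T_{\mu,n}=2+\tfrac{2}{n-2}F\bigl(d,(n-2)d\bigr)$ is a strictly increasing affine transform of an $F$ variate rather than an $F$ variate itself --- test-equivalent, but not literally the stated claim --- and the paper's proof silently elides the same point. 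The only detail worth spelling out when you write it up is the independence structure behind the Cochran appeal: within each half the sample mean is independent of that half's centered sum of squares, and the two halves are independent of each other, which together give the independence of $\|\bar{\by}_1-\bar{\by}_2\|^2$ from $dG_1+dG_2$.
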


\begin{lemma}
	Let Assumptions \ref{gr} - \ref{dist} hold. Then, the GSR test statistic for the graphical variance follows Fisher distribution. Precisely, 
	\[
	T_{\sigma, n} = \frac{dG_1}{dG_{2}} + \frac{dG_2}{dG_1} = Z +\frac{1}{Z} 
	\]
	where $dG_{1}, dG_{2}$ are the distances spanned by a fully-connected graph before and after a potential change point respectively. Thus, $ Z \sim F(d_{3}, d_{4})$ and $ \frac{1}{Z}\sim F(d_{4}, d_{3})$. $d_{3}, d_{4}$ are the degrees of freedom. 
\end{lemma}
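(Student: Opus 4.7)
The plan is to reduce the statement to the standard characterization of an $F$-distribution as the (appropriately scaled) ratio of two independent chi-square variables, using the structure we already established for $dG$ in Lemma \ref{spdidi} and the subsequent Property.

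First, I would observe that the graphs $G_1$ and $G_2$ are each complete graphs on $n/2$ nodes and satisfy Assumptions \ref{gr}--\ref{dist}, since $G_1$ is built from $\by_1,\dots,\by_{n/2}$ and $G_2$ from $\by_{n/2+1},\dots,\by_n$. Applying the Property (which states that the spanning distance of a complete graph on $m$ i.i.d.\ Gaussian nodes in $\mathbb{R}^d$ is $\chi^2$ with $(m-1)d$ degrees of freedom, after the implicit scaling by $\sigma^2$) with $m=n/2$, I would conclude
\[
dG_1 \sim \sigma^2\,\chi^2(d_3), \qquad dG_2 \sim \sigma^2\,\chi^2(d_4), \qquad d_3 = d_4 = (n/2-1)d.
\]

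Second, I would establish that $dG_1$ and $dG_2$ are independent. This is immediate from Assumption \ref{nodes}: the two halves of the sample are disjoint sets of i.i.d.\ random vectors, and $dG_1$ (resp.\ $dG_2$) is a measurable function of the first (resp.\ second) half, so the independence follows from the product structure of the joint distribution.

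Third, I would invoke the classical definition of the Fisher distribution: if $X \sim \chi^2(k_1)$ and $Y \sim \chi^2(k_2)$ are independent, then $(X/k_1)/(Y/k_2)\sim F(k_1,k_2)$. Writing
\[
Z \;=\; \frac{dG_1}{dG_2} \;=\; \frac{dG_1/(\sigma^2 d_3)}{dG_2/(\sigma^2 d_4)}\cdot \frac{d_3}{d_4},
\]
the common scale $\sigma^2$ cancels and, since $d_3=d_4$, the factor $d_3/d_4$ equals $1$, so $Z\sim F(d_3,d_4)$; symmetrically $1/Z\sim F(d_4,d_3)$. The lemma then follows by writing $T_{\sigma,n}=Z+1/Z$.

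There is no real obstacle here; the only mildly delicate point is ensuring that the scaling (by $\sigma^2$ and by the degrees of freedom) is handled cleanly so that the ratio is literally $F$-distributed rather than merely proportional to one, and that $Z + 1/Z$ is interpreted as the sum of two $F$-distributed variables that are deterministically dependent (each being the reciprocal of the other), not as an independent sum.
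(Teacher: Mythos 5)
Your argument is correct and rests on the same underlying fact as the paper's proof --- that the ratio of two independent chi-square variables (scaled by their degrees of freedom) is Fisher-distributed --- but you reach it by a cleaner route. The paper's proof recycles the $\xi_{-},\xi_{+}$ decomposition from Lemma~\ref{testmu} and identifies $Z = dG_1/dG_2$ with $\|\xi_{-}\|^{2}/\|\xi_{+}\|^{2}$, which does not quite match the earlier decomposition (there $dG_1$ appears as a centered sum of squares, not as $\|\xi_{-}\|^{2}$ itself); you instead apply the chi-square Property directly to the subgraphs $G_1$ and $G_2$, each a complete graph on $n/2$ i.i.d.\ Gaussian nodes. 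Your version also supplies two steps the paper leaves entirely implicit: the independence of $dG_1$ and $dG_2$ (which follows from the disjointness of the two halves of the window and is genuinely needed for the Fisher conclusion), and the degrees-of-freedom normalization, where your observation that $d_3 = d_4 = (n/2-1)d$ is exactly what makes the unnormalized ratio $dG_1/dG_2$ literally $F$-distributed rather than merely proportional to an $F$ variable. The one point to state more carefully is the scale factor: for a complete graph on $n/2$ nodes the spanning distance is $\tfrac{n}{4}\sigma^{2}$ times a $\chi^{2}\bigl((n/2-1)d\bigr)$ variable, not $\sigma^{2}$ times one; this is harmless because the identical constant appears in both $dG_1$ and $dG_2$ and cancels in the ratio, but your write-up should say so explicitly. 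Your closing remark that $Z + 1/Z$ is a sum of two deterministically dependent $F$ variables, and hence that the lemma characterizes $T_{\sigma,n}$ only as a function of a single $F$ variable rather than as $F$-distributed itself, is a fair and worthwhile caveat that the paper's own statement elides.
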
		
	
\section{Numerical studies}
To examine the power of our proposed method, we report our results of CPD from low- to high- dimensional data, with different choices of graph, and by various lengths of scanning window. The numerical studies consist two parts: first, the comparison of offline (static) graph-based change-point detection (GBCPD) methods, and second, online detection power of our proposed method. For static GBCPD (algorithm \ref{algo.offline}). We are curious about the comparison with the other existing graph based CPD method. Thus, we select graph representation of minimum-spanning tree (MST) and complete graph (CG) in comparison to the in-between-group edge counting (IBGEC) algorithm proposed by Chen and Zheng in Section \ref{subsec:Related} with various scanning window lengths. For the online GBCPD, we study the detection power of algorithm \ref{algo.online} in relation to the data dimension and to the scanning window length. At the final step, we apply multiple-scanning window to the online detection algorithm and study the detection performance with CG and MST graphs.

\subsection{Detection power comparison of offline (static) GBCPD methods }
\begin{table}[b]
\centering{
  \caption{Confusion matrix for detection accuracy and sensitivity}
  \label{table:confusion_table}
\begin{tabular}{ccc}
\cline{2-3}
\multicolumn{1}{l|}{}                                & \multicolumn{1}{l|}{True change-point} & \multicolumn{1}{l|}{True non-change-point} \\ \hline
\multicolumn{1}{|l|}{Identified as change-point}     & \multicolumn{1}{c|}{TP}                & \multicolumn{1}{c|}{FP}                    \\ \hline
\multicolumn{1}{|l|}{Identified as non-change-point} & \multicolumn{1}{c|}{FN}                & \multicolumn{1}{c|}{TN}                    \\ \hline
\end{tabular}
}
\end{table}

To quantify the detection power of our proposed method, we consider the scenarios that the observation follow certain parametric distribution. We generate 100 samples for detection power comparison. Each sample is consist of $n$ simulated i.i.d observations, n is even. With equal probability, it follows $d$ dimensional standard normal distribution $y_i \overset{i.i.d}{\sim} \mathbf{N}(0,I_d), i =1,\dots, n$ or from the model:
\begin{equation} 
	 y_i \overset{i.i.d}{\sim} \left\{  \begin{array}{ll} \mathbf{N}(0,I_d), \quad i =1,\dots, \frac{n}{2}; \\ 
	                                                     \mathbf{N}(\Delta, \Sigma), \quad i =\frac{n}{2}+1,\dots, n. \end{array} \right.
\end{equation}\label{eq:simu_offline}

We consider both accuracy and sensitivity as a general way of comparing detection power \cite{aminikhanghahi2017survey}. Detection accuracy is defined as how often the detection algorithm make right decision, that is, to identify change-point when there in reality a true change-point, and identify no change-point when there is true non-change-point. That is accuracy$=\frac{TP+TN}{TP+TN+FP+FN}$,using notation defined in Table \ref{table:confusion_table}. We denote FPR $= \frac{FP}{FP+TN}$ as the false positive rate which is rate of giving a false alarm when no change-point present.

For detection sensitivity, we concern about the success rate of identify a change-point when there indeed true change-point exist. Therefore, sensitivity $= \frac{TP}{TP+FN}$. To consider the detection power with both the accuracy and the sensitivity of the detection methods, here we define a power metric as the geometric mean of the accuracy and sensitivity \cite{aminikhanghahi2018real}, P\_mean $= \sqrt{accuracy \times sensitivity}$

\begin{figure}[t]
 \hspace {6pt}   sGSR$_{CG}$ \hspace{80pt}       sGSR$_{MST}$  \hspace{60pt}   IBGEC
  \centering 
  \includegraphics[height=.15\textheight]{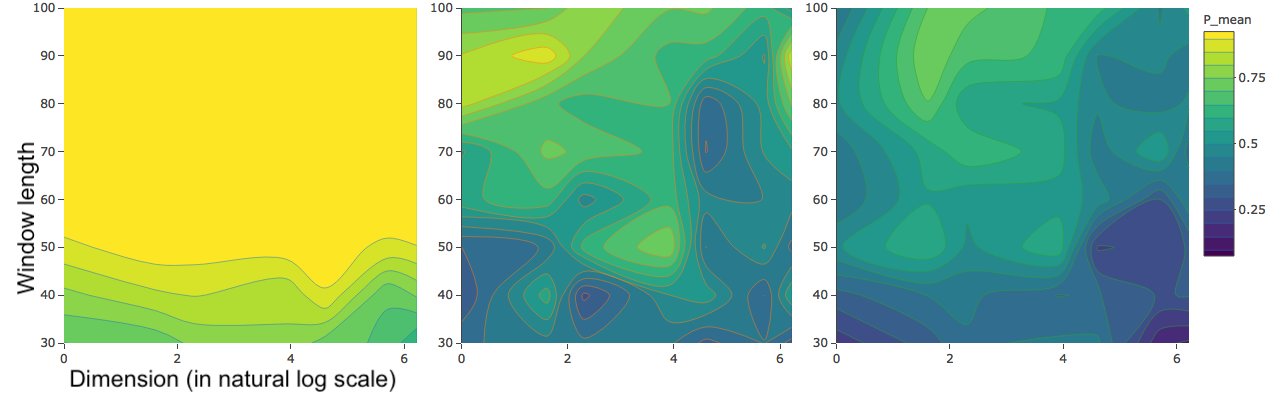}\setlength{\belowcaptionskip}{-5pt}
  \caption{Offline detection power P\_mean for mean change of $\Delta = 1/ \sqrt[3]{d}$, $d$ is the dimension of the data. Comparison between methods sGSR$_{CG}$, sGSR$_{MST}$, and IBGEC with respect to dimension and window length, with significance less than 5$\%$.}
 \label{fig:Contour_Static}
\end{figure}
\begin{figure}[t]
 \hspace {6pt}   sGSR$_{CG}$ \hspace{80pt}       sGSR$_{MST}$  \hspace{60pt}   IBGEC
  \centering 
   \includegraphics[height=.15\textheight]{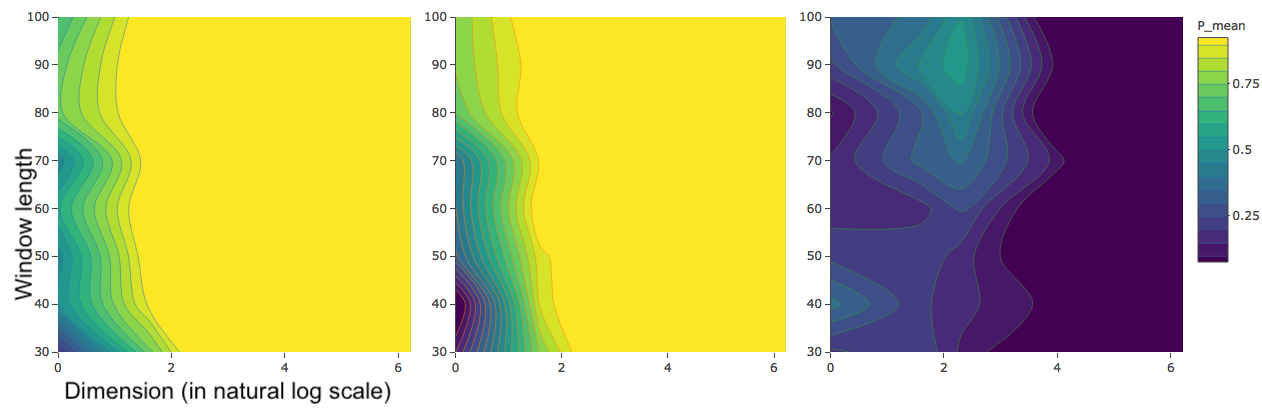} \setlength{\belowcaptionskip}{-5pt}
  \caption{Offline detection power P\_mean for variance change of $\Sigma = 2 I_d$. Comparison between methods sGSR$_{CG}$, sGSR$_{MST}$, and IBGEC with respect to dimension and window length, with significance less than 5$\%$.}
 \label{fig:Contour_StaticV}
\end{figure}
\begin{figure}[b]
(a) \hspace {3pt}  oGSR$_{CG}$ \hspace{30pt}   oGSR$_{MST}$ \hspace{50pt}  (b) \hspace {6pt}  oGSR$_{CG}$ \hspace{30pt}   oGSR$_{MST}$  
  \centering
  \includegraphics[height=.15\textheight]{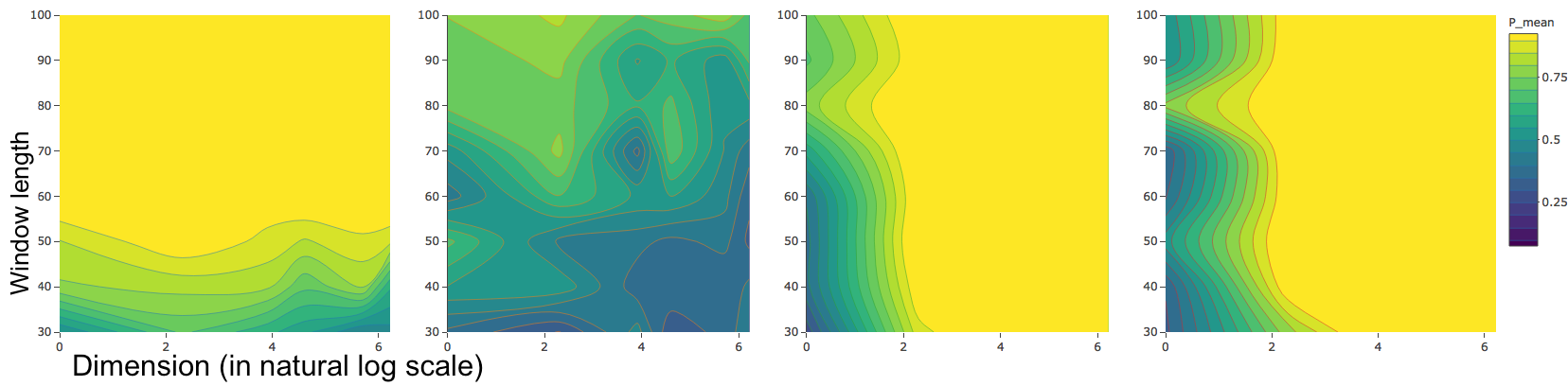}\setlength{\belowcaptionskip}{-5pt}
  \caption{Online detection power  P\_mean: (a) mean change of $\Delta = 1/ \sqrt[3]{d}$, $d$ is the dimension of the data and (b) variance change of $\Sigma = 2 I_d$. Comparison between oGSR$_{CG}$, oGSR$_{MST}$ with respect to dimension and window length,  with significance less than 10$\%$.}
 \label{fig:Contour_Online}
\end{figure}

In this numerical analysis, we generate 100 samples, in each sample it contain either with or without change-point in the middle of sample, as shown in (\ref{eq:simu_offline}). Note that for this power determination simulation, each sample is of the same length of the scanning window $n$, so in this setting, the change-point occurs at location $n/2$ of the data, if there is any. The purpose of this simulation is to compare the power of our proposed method with that of recent graph based change-point detection algorithm mentioned in Section \ref{subsec:Related}, which is an offline graph based method using in-between-group edge counting (IBGEC) algorithm. We compare the accuracy results from proposed staticGSR algorithm using graph choices of CG and MST, denote as sGSR$_{CG}$ and sGSR$_{MST}$, respectively, to result from in-between-group edge counting algorithm (IBGEC) by Chen and Zheng et al. We depict the detection power with respect to dimensionality of data $d$, and size of window $n$. Figure \ref{fig:Contour_Static} is the detection result of a mean change $\Delta=1/ \sqrt[3]{d}$. The detection power are higher for sGSR$_{CG}$ compared to IBGEC method, across all dimension and window length. In Figure \ref{fig:Contour_StaticV}, a significant improvement in detecting variance change, in particular, with small window length under high-dimensional scenarios. This make our proposed algorithm more ideal for further online detection, where a timely detection of change-point is important.

\subsection{Detection power for the online graph change point detection}
\begin{table}[t] 
\centering{
  \caption{Online detection power for mean change of $\Delta = 1/ \sqrt[3]{d}$, $d$ is the dimension of the data. Detect with multiple scanning window length of $\{40, 70,100\}$}\label{tb:OnlineM}
\begin{tabular}{ccrrrrrr}
 \toprule 
  &d           & 1    & 10   & 50   & 100  & 300  & 500  \\
 \toprule 
oGSR$_{CG}$ &P\_mean  & 0.97 & 0.98 & 0.98 & 0.97 & 0.99 & 0.98 \\
& FPR    & 0.08 & 0.09 & 0.09 & 0.10 & 0.05 & 0.05 \\
\cmidrule(r){1-8}  
 oGSR$_{MST}$& P\_mean & 0.60 & 0.64 & 0.71 & 0.53 & 0.52 & 0.66 \\
& FPR    & 0.10 & 0.11 & 0.13 & 0.14 & 0.06 & 0.16 \\
\bottomrule
\end{tabular}
}
\end{table}

\begin{table}[t]
\centering{
  \caption{ Online detection power for variance change of $\Sigma = 2 I_d$. Detect with multiple scanning window length of $\{40, 70,100\}$}\label{tb:OnlineV}
\begin{tabular}{ccrrrrrr}
 \toprule 
  &d           & 1    & 10   & 50   & 100  & 300  & 500  \\
 \toprule 
oGSR$_{CG}$ &P\_mean  & 0.50 & 0.98 & 0.98 & 0.97 & 0.98 & 0.98 \\
& FPR    & 0.09 & 0.07 & 0.09 & 0.12 & 0.08 & 0.07 \\
\cmidrule(r){1-8}  
 oGSR$_{MST}$& P\_mean& 0.46 & 0.97 & 0.97 & 0.96 & 0.98 & 0.96 \\
& FPR    & 0.11 & 0.11 & 0.12 & 0.16 & 0.07 & 0.13 \\
\bottomrule
\end{tabular}
}
\end{table}
In this section, we examine the online detection power of the proposed onlineGSR algorithm with CM and MST graphs (oGSR$_{CG}$,oGSR$_{MST}$). We generate 1000 samples for testing, each sample is consist of 100 observations, with first-half of observations follow $d$ dimensional standard normal distribution, and second-half of observations, with equal probability, either remains the same distribution or has a change in its distribution. For each dimension and each window length, we examine the power within the sample. In general, as seen in Figure \ref{fig:Contour_Online}, OnlineGSR algorithm with complete graph generally demonstrates better testing power under the same significance level. Furthermore, we study the online detecting power and false alarm rate, i.e. FPR, with multiple scanning window length. Scanning window from short- to long- length $\{40, 70, 100\}$ are applied into the online detection algorithm. In Table \ref{tb:OnlineM} and \ref{tb:OnlineV}, OnlineGSR algorithm with MST graph demonstrates high detection power and low false alarm rate for high-dimensional data.

\section*{Conclusion}
We proposed a novel graph-spanning ratio algorithm for change-point problem for data from low-to high- dimension. Comparing to a recent offline literature, our approach is sensitive to both mean and variance change. Among graph selections, our algorithm with complete graph delivers a high detecting power while maintain nominal false alarm rate. An important observation is that our graph-spanning approach has desirable power with small and multiple scanning windows, which enables online timely detection to change-point. Our approach is completely data-driven and the implementation can be extended to real-time data, such as finance stocks, human microbiome \cite{kuleshov2016synthetic}, and smart home data, where the problem of change detection is high-dimensional oriented.


\bibliographystyle{alpha}
\bibliography{References.bib}

\newcommand{\etalchar}[1]{$^{#1}$}
\begin{thebibliography}{KJZ{\etalchar{+}}16}

\bibitem[AB{\etalchar{+}}18]{avanesov2018change}
Valeriy Avanesov, Nazar Buzun, et~al.
\newblock Change-point detection in high-dimensional covariance structure.
\newblock {\em Electronic Journal of Statistics}, 12(2):3254--3294, 2018.

\bibitem[AC17]{aminikhanghahi2017survey}
Samaneh Aminikhanghahi and Diane~J Cook.
\newblock A survey of methods for time series change point detection.
\newblock {\em Knowledge and information systems}, 51(2):339--367, 2017.

\bibitem[AWC18]{aminikhanghahi2018real}
Samaneh Aminikhanghahi, Tinghui Wang, and Diane~J Cook.
\newblock Real-time change point detection with application to smart home time
  series data.
\newblock {\em IEEE Transactions on Knowledge and Data Engineering}, 2018.

\bibitem[CF17]{chen2017new}
Hao Chen and Jerome~H Friedman.
\newblock A new graph-based two-sample test for multivariate and object data.
\newblock {\em Journal of the American statistical association},
  112(517):397--409, 2017.

\bibitem[CG11]{chen2011parametric}
Jie Chen and Arjun~K Gupta.
\newblock {\em Parametric statistical change point analysis: with applications
  to genetics, medicine, and finance}.
\newblock Springer Science \& Business Media, 2011.

\bibitem[CZ{\etalchar{+}}15]{chen2015graph}
Hao Chen, Nancy Zhang, et~al.
\newblock Graph-based change-point detection.
\newblock {\em The Annals of Statistics}, 43(1):139--176, 2015.

\bibitem[FR79]{friedman1979multivariate}
Jerome~H Friedman and Lawrence~C Rafsky.
\newblock Multivariate generalizations of the wald-wolfowitz and smirnov
  two-sample tests.
\newblock {\em The Annals of Statistics}, pages 697--717, 1979.

\bibitem[HMB09]{harchaoui2009kernel}
Zaid Harchaoui, Eric Moulines, and Francis~R Bach.
\newblock Kernel change-point analysis.
\newblock In {\em Advances in neural information processing systems}, pages
  609--616, 2009.

\bibitem[Kir08]{kirch2008bootstrapping}
Claudia Kirch.
\newblock Bootstrapping sequential change-point tests.
\newblock {\em Sequential Analysis}, 27(3):330--349, 2008.

\bibitem[KJZ{\etalchar{+}}16]{kuleshov2016synthetic}
Volodymyr Kuleshov, Chao Jiang, Wenyu Zhou, Fereshteh Jahanbani, Serafim
  Batzoglou, and Michael Snyder.
\newblock Synthetic long read sequencing reveals the composition and
  intraspecies diversity of the human microbiome.
\newblock {\em Nature biotechnology}, 34(1):64, 2016.

\bibitem[PM92]{provost1992quadratic}
Serge~B Provost and AM~Mathai.
\newblock {\em Quadratic forms in random variables: theory and applications}.
\newblock M. Dekker, 1992.

\bibitem[Ros05]{rosenbaum2005exact}
Paul~R Rosenbaum.
\newblock An exact distribution-free test comparing two multivariate
  distributions based on adjacency.
\newblock {\em Journal of the Royal Statistical Society: Series B (Statistical
  Methodology)}, 67(4):515--530, 2005.

\bibitem[S{\etalchar{+}}09]{spokoiny2009multiscale}
Vladimir Spokoiny et~al.
\newblock Multiscale local change point detection with applications to
  value-at-risk.
\newblock {\em The Annals of Statistics}, 37(3):1405--1436, 2009.

\bibitem[SD15]{spokoiny2015basics}
Vladimir Spokoiny and Thorsten Dickhaus.
\newblock {\em Basics of modern mathematical statistics}.
\newblock Springer, 2015.

\bibitem[SYZ11]{siegmund2011detecting}
David Siegmund, Benjamin Yakir, and Nancy~R Zhang.
\newblock Detecting simultaneous variant intervals in aligned sequences.
\newblock {\em The Annals of Applied Statistics}, pages 645--668, 2011.

\end{thebibliography}

\newpage
\begin{appendices}
\section{Nomenclature}
$y_i: i=1, 2, \dots. :$ sequential observations \\
$d:$ dimension of data \\
$n:$ size of scanning window\\
$N:$ tranning sample size for online GBCPD (Graph Based Change-Point Detection) \\
$\tau:$ location of change-point  \\
$F_0:$ distribution prior the change-point, i.e. distribution under the null hypothesis \\
$F_1:$ distribution after the change-point, i.e. distribution under the alternative hypothesis  \\
$G = (V,E):$ undirected graph consists vertices $V$ and edges $E$  \\
$W_{ij}:$ graph edge weight  \\
$G :$ graph constructed using all data points from the scanning window  \\
$G_1:$ graph constructed using data points from the first half of the scanning window, i.e. before change-point candidate  \\
$G_2: $graph constructed using data points from the second half of the scanning window, i.e. after change-point candidate   \\
$dG:$ sum of squared Euclidean distance of $G$ with window size of $n$    \\
$dG_1:$ sum of squared Euclidean distance of $G_1$ with window size of $n$  \\
$dG_2:$ sum of squared Euclidean distance of $G_2$ with window size of $n$  \\
$T_{\mu,n}:$ test statistic for graphical mean change with window size of $n$  \\
$T_{\sigma,n}:$ test statistic for graphical variance change with window size of $n$  \\
$\alpha:$ significant level of hypothesis testing  \\
$\rho_{\alpha,\mu,n}:$ critical value for test statistic $T_{\mu,n}$  \\
$\rho_{\alpha,\sigma,n}:$ critical value for test statistic $T_{\sigma,n}$ \\ 
$T_{\mu,n}^b:$ test statistic for graphical mean change with window size of n generated from permutation distribution \\
$T_{\sigma,n}^b:$ test statistic for graphical variance change with window size of n generated from permutation distribution  \\
$Z_{\mu,n}^b(z) :$ quantile function of $T_{\mu,n}^b$  \\
$Z_{\sigma,n}^b(z) :$ quantile function of $T_{\sigma,n}^b$  \\ 
$\rho^b_{\alpha,\mu,n}:$  critical value for test statistic $T_{\mu,n}$ from permutation distribution \\ 
$\rho^b_{\alpha,\sigma,n}:$  critical value for test statistic $T_{\sigma,n}$ from permutation distribution \\ 

\newpage
\section{Theoretical property and proof}
	We assume $ N$ is the number of observations and $ n $ is the scanning window size, where $ n < N $ and $ n $ is an even number.
	\begin{assumption}\label{A_gr}
		$ G$ is a complete graph (clique) with $ n $ nodes. By this we mean that the graph $ G $ is undirected and every pair of distinct vertices is connected by a unique edge.
	\end{assumption}
	
	\begin{assumption}\label{A_nodes}
		The nodes of  graph  $ G $, namely $ \by_i$, are i.i.d. random variables normally distributed, $ \by_i \sim\mathcal{N}\left(\mu, \sigma^{2} I \right)$ and $ \by_i \in \R$.
	\end{assumption}

\begin{assumption}\label{A_dist}
	The nodes are paired with the euclidean distance in $ \R $. 
\end{assumption}
	Next, we show some theoretical properties concerning the distribution of the GSR test statistics and the spanning distance of the graph.
	\begin{lemma} \label{A_spdidi}
		Let Assumptions \ref{A_gr}-\ref{A_dist} hold. Then, the distance spanned by $ G $ 
		\[ 
		dG = \sum_{i<j} \|\by_i, -\by_j\|^{2} = \sum_{i= 1}^{m} \lambda_{i} U_{i}^{2}
		 \]
		is  a linear combination of independent chi-square distributions, where $\|\cdot\|$ is the euclidean distance in $ \R $,$ U $ follows a multivariate standard normal distribution. 
	\end{lemma}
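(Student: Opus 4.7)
The approach is to write $dG$ as a quadratic form in the stacked Gaussian observations, diagonalise, and read off the chi--square decomposition. Let $Y \in \mathbb{R}^{n\times d}$ be the matrix whose $i$-th row is $\by_i^\top$, and let $y_{\cdot k}$ denote its $k$-th column, so that $y_{\cdot k}\sim\mathcal{N}(\mu_k\mathbf{1}_n,\sigma^2 I_n)$ and the $d$ columns are mutually independent by Assumption \ref{A_nodes}. Expanding $\|\by_i-\by_j\|^2$ coordinatewise and summing pairwise gives the elementary identity
\[
dG \;=\; \sum_{k=1}^d\Bigl(n\sum_{i=1}^n y_{ik}^2 - \bigl(\textstyle\sum_{i=1}^n y_{ik}\bigr)^2\Bigr) \;=\; \sum_{k=1}^d y_{\cdot k}^\top M\, y_{\cdot k}, \qquad M \;:=\; nI_n - \mathbf{1}_n\mathbf{1}_n^\top,
\]
which uses Assumptions \ref{A_gr}--\ref{A_dist} (completeness of $G$, squared Euclidean weights).

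Next I would diagonalise $M$. This centring matrix is symmetric with spectrum $\{0\}\cup\{n,n,\dots,n\}$: the normalised all-ones direction $\mathbf{1}_n/\sqrt{n}$ is the unique null eigenvector, and its $(n-1)$-dimensional orthogonal complement carries the eigenvalue $n$. Writing $M = Q\Lambda Q^\top$ with $Q$ orthogonal and setting $z_k := Q^\top y_{\cdot k}/\sigma$, the orthogonal change of variables gives $z_k \sim \mathcal{N}(Q^\top\mu_k\mathbf{1}_n/\sigma,\, I_n)$; crucially, every eigenvector associated with the eigenvalue $n$ is orthogonal to $\mathbf{1}_n$, so the mean of $z_{k,2},\dots,z_{k,n}$ vanishes and these $n-1$ entries are i.i.d.\ $\mathcal{N}(0,1)$. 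Hence
\[
y_{\cdot k}^\top M y_{\cdot k} \;=\; \sigma^2\, z_k^\top \Lambda\, z_k \;=\; n\sigma^2\sum_{i=2}^n z_{k,i}^2.
\]

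Summing over $k=1,\dots,d$ stacks $(n-1)d$ independent standard normal summands (independence across $k$ comes from the product covariance $\sigma^2 I_d$, independence across $i$ for fixed $k$ from the orthogonality of $Q$). Relabelling these as $U_1,\dots,U_m$ with $m=(n-1)d$ yields
\[
dG \;=\; \sum_{\ell=1}^{m}\lambda_\ell\, U_\ell^2, \qquad \lambda_\ell \;=\; n\sigma^2,
\]
which is the advertised linear combination of independent chi--square variables; here all $\lambda_\ell$ coincide, so in fact $dG = n\sigma^2\,\chi^2_{(n-1)d}$, consistent with the Property stated immediately after the lemma in the main text.

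The main obstacle is essentially bookkeeping rather than anything deep: one must cleanly separate the observation index $i$ from the coordinate index $k$, note that the null eigenvector of $M$ is exactly the direction along which $\mu$ lives so that the mean cancels and the limiting distribution is $\mu$-free, and verify the joint independence of the $(z_{k,i})_{k,i}$---which is precisely where the isotropic Gaussian assumption $\sigma^2 I_d$ (not a general covariance) is needed.
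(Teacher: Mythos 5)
Your proof is correct, and it takes a genuinely different route from the paper's. The paper stacks all pairwise differences $\w_{i,j}=\by_i-\by_j$ into one long vector $W$ of dimension $m=\binom{n}{2}d$, takes the covariance $\Sigma$ of that vector, and diagonalises $\Sigma$ to write $dG=W^\top W=\sum_{i=1}^{m}\lambda_i U_i^2$; the mean cancels there because each difference is automatically centred. You instead use the identity $\sum_{i<j}\|\by_i-\by_j\|^2=\sum_k y_{\cdot k}^\top(nI_n-\mathbf{1}_n\mathbf{1}_n^\top)y_{\cdot k}$ and diagonalise the explicit $n\times n$ centring matrix, with the mean cancelling because $\mathbf{1}_n$ spans its null space. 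Your version buys more: it exhibits the eigenvalues explicitly (all equal to $n\sigma^2$, with multiplicity $(n-1)d$), so you get the sharper conclusion $dG=n\sigma^2\chi^2_{(n-1)d}$ for free, which is exactly the Property the paper asserts without proof and also the fact invoked later in the proof of the mean-statistic lemma. It also sidesteps a technical wrinkle in the paper's argument: the covariance $\Sigma$ of the stacked difference vector is singular (the differences satisfy linear relations such as $\w_{1,2}+\w_{2,3}=\w_{1,3}$), so the whitening step $Z=\Sigma^{-1/2}W$ is not literally well defined and most of the paper's $\binom{n}{2}d$ eigenvalues are in fact zero; your decomposition keeps only the nonzero ones. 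The only cosmetic discrepancy is that your index count $m=(n-1)d$ differs from the paper's $m=\binom{n}{2}d$, but the two expressions describe the same random variable.
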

	\begin{proof}
		We define  by $ \w_{i,j} = ( \by_i - \by_j )$ to be a vector in $ \R $.  Then by Assumption \ref{A_nodes}, $ \w_{i,j}\sim \mathcal{N}(0, 2\sigma^{2}I) $ for $ i<j $ and $ i, j = 1,2, \dots, n $. Using the fact that the graph is fully connected , we construct $W = (\w_{i,j})$ to be a vector $ m\times 1 $ where $ m=$ $ {n}\choose {2}$$\cdot d $ and ${n}\choose {2}  $ is the total number of edges. Let $ \Sigma $ be the covariance matrix of $ W $. We define by  $ Z = \Sigma ^{-1/2}W $ to have zero mean and identity as covariance matrix. We are interested in the quadratic form
	\begin{equation}\label{A_spdec}
	dG = \sum_{i<j}  \big(\by_i - \by_j\big)^{\top}\big( \by_{i}-\by_{j}\big)= W^{\top}W=Z^{\top}\Sigma^{1/2} \Sigma^{1/2} Z.
	\end{equation}
	Henceforth, applying spectral decomposition theorem we obtain that $ \Sigma^{1/2}\Sigma^{1/2}= V^{\top}DV $, where $ D $ is the eigenvalue diagonal matrix of $ \Sigma $ with $ \lambda_{i} $ the eigenvalues of $\Sigma $ for $ i = 1, \cdots, m $ and $ V $ is the orthogonal matrix of the eigenvectors. Further,
	\begin{equation}\label{A_sp2}
	dG = Z^{\top}V^{\top}DV Z= U^{\top}DU
	\end{equation}
	where $ U= VZ $ and $ U $ is multivariate normal with mean zero and identity covariance matrix. The interested reader should refer to Chapter 4 of \cite{provost1992quadratic} for quadratic representations of multivariate normal distributions. To sum up, we have,
	\[
	dG = \sum_{i<j}W_{i,j}^{2} = W^{\top}W = U^{\top}DU= \sum_{i= 1}^{m}\lambda_{i}U_{i}^{2}.
	\]
	where $ U $ follows a multivariate standard normal distribution. As a result $dG$ is a linear combination of independent chi-squared random variable and the proof now is complete.
\end{proof}

\begin{property}
The distance spanned by $ G $ follows chi-square distribution with degrees of freedom $ (m-1)d $.
\end{property}
Further, we give the distribution ot the GSR mean test statistic $T_{\mu,n}$.
\begin{lemma}\label{A_testmu}
Let Assumptions \ref{A_gr} - \ref{A_dist} hold. Then, the GSR test statistics for  the graphical mean follows Fisher distribution. Precisely, 
\[
T_{\mu, n} = \frac{dG}{dG_{1} + dG_{2}}\sim F(d_{1}, d_{2}),
\]
where $ dG, dG_{1}, dG_{2}$ are the distances spanned by a fully-connected graph without change-point, before and after a potential change-point respectively. $ d_{1}, d_{2} $ are the degrees of freedom.
	\end{lemma}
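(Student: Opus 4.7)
The plan is to represent $T_{\mu,n}$ as a ratio of two independent scaled chi-square random variables, which is the defining structure of Fisher's $F$-distribution.

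First I would apply Lemma~\ref{A_spdidi} and the preceding Property separately to each of the three subgraphs to record that, up to the common scale $\sigma^{2}$, the quantities $dG$, $dG_{1}$, and $dG_{2}$ are chi-square distributed with degrees of freedom $(n-1)d$, $(n/2-1)d$, and $(n/2-1)d$ respectively, and that by independence of the two halves under $H_{0}$ the sum $(dG_{1}+dG_{2})/\sigma^{2}$ is $\chi^{2}_{(n-2)d}$.

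The key step is an ANOVA-style decomposition of $dG$. Using the identity $\sum_{i<j}\|y_{i}-y_{j}\|^{2} = m\sum_{i}\|y_{i}-\bar y\|^{2}$ for $m$ points, rewrite $dG = n\cdot \mathrm{SS}_{\mathrm{tot}}$ and $dG_{1}+dG_{2} = (n/2)\cdot \mathrm{SS}_{\mathrm{w}}$, where $\mathrm{SS}_{\mathrm{tot}}$ and $\mathrm{SS}_{\mathrm{w}}$ are the usual total and pooled within-group sums of squares about the respective sample means. The classical Gaussian decomposition $\mathrm{SS}_{\mathrm{tot}} = \mathrm{SS}_{\mathrm{w}} + \mathrm{SS}_{\mathrm{b}}$, combined with Cochran's theorem, then gives $\mathrm{SS}_{\mathrm{b}}/\sigma^{2}\sim\chi^{2}_{d}$ and $\mathrm{SS}_{\mathrm{w}}/\sigma^{2}\sim\chi^{2}_{(n-2)d}$ independently, which places the ratio $\frac{\mathrm{SS}_{\mathrm{b}}/d}{\mathrm{SS}_{\mathrm{w}}/((n-2)d)}$ into the Fisher family $F\bigl(d,(n-2)d\bigr)$. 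Substituting back, $T_{\mu,n}$ is then identified with the stated $F$-distributed quantity after absorbing the ANOVA multipliers $n$ and $n/2$ into the identification $d_{1}=d$, $d_{2}=(n-2)d$.

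The main obstacle is the spurious dependence in the literal ratio: $dG$ and $dG_{1}+dG_{2}$ share the same observations, so one cannot directly invoke the definition of the $F$-distribution on the quantity as written. The cleanest way around this, as sketched above, is to replace $dG$ by its orthogonal component $dG-(dG_{1}+dG_{2})$ and let Cochran's theorem deliver the independence of this component from $dG_{1}+dG_{2}$. The remaining work is careful bookkeeping of the scalar factors appearing in the ANOVA identities, so that what finally emerges is a genuine $F$-variable with the stated degrees of freedom rather than merely an affine transform of one; any gap here would be exactly the gap between the stated Lemma and the honest two-sample $F$-test.
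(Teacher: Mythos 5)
Your plan follows the same route as the paper's own proof---reduce the pairwise squared distances to an ANOVA-type sum of squares via the identity $\sum_{i<j}\|y_i-y_j\|^2=m\sum_i\|y_i-\bar y\|^2$---but you carry it to completion where the paper does not, and in doing so you expose a genuine defect in both the paper's argument and the statement of the lemma. The paper stops at the display $\frac{dG}{dG_1+dG_2}=\frac{S-\|\xi_-+\xi_+\|^2}{S-\|\xi_-\|^2-\|\xi_+\|^2}$, writes $S=\|\xi_-\|^2+\|\xi_+\|^2+S_{rem}$, and then simply asserts the Fisher conclusion: it never shows that the numerator and denominator are independent chi-square variables, and in fact the numerator of its final expression collapses to $S_{rem}-2\xi_-^\top\xi_+$, so the displayed ratio equals $1-2\xi_-^\top\xi_+/S_{rem}$, which is not an $F$ ratio in any recognizable form. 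Your appeal to Cochran's theorem is precisely the ingredient the paper is missing: it is what delivers the independence of the between-group and within-group sums of squares, $SS_b/\sigma^2\sim\chi^2_{d}$ and $SS_w/\sigma^2\sim\chi^2_{(n-2)d}$, on which any Fisher conclusion must rest. You are also right, and it is not a minor bookkeeping point, that what emerges is only an affine transform of an $F$ variable. Since $G_1$ and $G_2$ are subgraphs of $G$ with nonnegative edge weights, $dG\ge dG_1+dG_2$ always, so $T_{\mu,n}\ge 1$ almost surely and cannot literally follow an $F$ distribution, whose support is all of $(0,\infty)$. Tracking your multipliers (note they differ: $dG=n\,SS_{\mathrm{tot}}$ while $dG_1+dG_2=\tfrac{n}{2}\,SS_{\mathrm{w}}$) gives exactly $T_{\mu,n}=2+\tfrac{2}{n-2}F$ with $F\sim F\bigl(d,(n-2)d\bigr)$, a random variable supported on $[2,\infty)$. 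That is all one needs in practice, since the map is increasing and critical values transfer monotonically, but the lemma as stated, and the paper's proof of it, both overstate the conclusion; your version of the argument is the defensible one.
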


	\begin{proof}
	Recall that we perform the GSR test statistics at sequential scanning windows of size $ n $. Following the notation from Lemma \ref{A_spdidi}, we define by $ dG = \frac{1}{2}\sum_{i, j} \|\by_{i} -\by_{j}\|^{2} $.
	After some trivial calculations we have that 
	\begin{equation}
	\begin{aligned}
	dG = & \frac{1}{2}\sum_{i, j} \|\by_{i} -\by_{j}\|^{2} = \frac{1}{2}\bigg(\sum_{i, j}\|\by_{i}\|^{2}+\|\by_{j}\|^{2}- 2 \by_{i}^{\top}\by_{j} \bigg)\\
	=& \frac{1}{2}\bigg( n \sum_{i}\|\by_{i}\|^{2} + n\sum_{j}\|\by_{j}\|^{2}-2 \sum_{i,j}\by_{i}^{\top}\by_{j}\bigg)\\
	=& \frac{1}{2}\bigg( n\sum_{i}\|\by_{i}\|^{2}-\left\|\sum_{i}\by_{i}\right\|^{2}\bigg).
	\end{aligned}	
		\end{equation}
Observe that $n\sum_{i}\|\by_{i}\|^{2}$ follows chi-square distribution with $ d(n-1) $ degrees of freedom \cite{spokoiny2015basics}. In light of the above calculations, the fact that we perform the GSR test statistics  in scanning window of size $ n$ and under the null hypothesis, we get that
\begin{equation}\label{A_ratio}
\frac{dG}{dG_{1}+dG_{2}}= \frac{\sum\limits_{i =\tau-n}^ {\tau +n}\|\by_{i}\|^{2}-\frac{1}{n}\left\|\sum\limits_{i= \tau-n}^{\tau +n}\by_{i}\right\|^{2}}{\sum\limits_{i =\tau-n}^{\tau+n}\|\by_{i}\|^{2}-\frac{1}{n}\left\|\sum\limits_{i= \tau-n}^{\tau}\by_{i}\right\|^{2}-\frac{1}{n}\left\|\sum\limits_{i= \tau}^{\tau +n}\by_{i}\right\|^{2}}.
\end{equation}
We set $ \xi = \frac{1}{\sqrt{n}}\sum\limits_{\tau-n}^{\tau+n}\by_{i} $, $ \xi_{-} = \frac{1}{\sqrt{n}}\sum\limits_{\tau-n}^{\tau}\by_{i} $,  $ \xi_{+} = \frac{1}{\sqrt{n}}\sum\limits_{\tau}^{\tau+n}\by_{i} $, $ S = \sum\limits_{\tau-n}^{\tau+n}\|\by_{i}\|^{2} $. Hence, (\ref{A_ratio}) takes the form 
\begin{equation}\label{A_ratio2}
\frac{dG}{dG_{1} +dG_{2}} = \frac{S-\|\xi_{-}+\xi_{+}\|^{2}}{S- \|\xi_{-}\|^{2}-\|\xi_{+}\|^{2}}.
\end{equation}
We notice that $ S = \|\xi_{-}\|^{2} +\|\xi_{+}\|^{2} +S_{rem} $, where $ S_{rem} $ are some remainder terms of $ S $. Thus, the GSR mean test statistic will follow Fisher distribution. Indeed, 
\begin{equation}
\frac{dG}{dG_{1} +dG_{2}} = \frac{\|\xi_{-}\|^{2} +\|\xi_{+}\|^{2} +S_{rem}-\|\xi_{-}+\xi_{+}\|^{2} }{S_{rem}}.
\end{equation}

\end{proof}

\begin{lemma}
	Let Assumptions \ref{A_gr} - \ref{A_dist} hold. Then, the GSR test statistic for the graphical variance follows the the distribution, 
		\[
	T_{\sigma, n} = \frac{dG_1}{dG_{2}} + \frac{dG_2}{dG_1} = Z +\frac{1}{Z} 
	\]
	where $dG_{1}, dG_{2}$ are the distances spanned by a fully-connected graph before and after a potential change point respectively. Thus, $ Z \sim F(d_{3}, d_{4})$ and $ \frac{1}{Z}\sim F(d_{4}, d_{3})$. $d_{3}, d_{4}$ are the degrees of freedom. 
\end{lemma}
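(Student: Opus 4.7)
The plan is to mirror the strategy used in Lemma \ref{A_testmu}, but now exploit the fact that under $H_0$ the two halves of the scanning window are independent samples from the same distribution. First, I would apply the decomposition established in the proof of Lemma \ref{A_testmu}, namely
\[
dG_k = \tfrac{1}{2}\Big( n \sum_{i\in I_k}\|\by_i\|^2 - \Big\|\sum_{i\in I_k}\by_i\Big\|^2\Big), \qquad k=1,2,
\]
where $I_1 = \{1,\dots,n/2\}$ and $I_2=\{n/2+1,\dots,n\}$ index the pre- and post-change halves of the scanning window. Introducing the centred sums $\xi_k = n^{-1/2}\sum_{i\in I_k}\by_i$, each $dG_k$ can be written as $\sum_{i\in I_k}\|\by_i\|^2-\|\xi_k\|^2$, which is precisely the squared length of the residual projection of the Gaussian vector $(\by_i)_{i\in I_k}$ onto the orthogonal complement of the span of the all-ones vector.

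Next, I would invoke the Property proved immediately after Lemma \ref{A_spdidi}: since this residual projection lives on a subspace of dimension $d(n/2-1)$, the quantity $dG_k/\sigma^2$ follows a chi-square distribution with $d(n/2-1)$ degrees of freedom. Crucially, under Assumption \ref{A_nodes} the observations in $I_1$ and $I_2$ are independent, so $dG_1$ and $dG_2$ are independent chi-square variables with the same degrees of freedom $d_3 = d_4 = d(n/2-1)$. Setting $Z = dG_1/dG_2$, the ratio of two independent chi-squares divided by their (equal) degrees of freedom is the defining representation of a Fisher variable, giving $Z \sim F(d_3,d_4)$. The elementary identity that if $Z\sim F(d_3,d_4)$ then $1/Z\sim F(d_4,d_3)$ then yields the reciprocal claim, and the algebraic identity
\[
T_{\sigma,n} = \frac{dG_1}{dG_2} + \frac{dG_2}{dG_1} = Z + \frac{1}{Z}
\]
is immediate from the definition.

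The main obstacle I expect is the independence step: although the two halves of the sample are independent by assumption, the decomposition $dG_k = \tfrac12(n\sum\|\by_i\|^2 - \|\sum\by_i\|^2)$ mixes the observations within each half, so one has to argue that each $dG_k$ is a measurable function only of $(\by_i)_{i\in I_k}$ to transfer sample independence to independence of the test-statistic components. A secondary subtlety is that the scaling factor $\sigma^2$ cancels in the ratio $Z$, so the statement does not require knowledge of $\sigma$; this should be noted to make the distributional conclusion sharp. I would also remark on the symmetric structure: since $d_3 = d_4$, the distribution of $T_{\sigma,n}$ under $H_0$ is symmetric under $Z\mapsto 1/Z$, which matches the symmetric construction of the statistic and serves as a sanity check.
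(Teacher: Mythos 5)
Your proposal is correct and follows the same basic route as the paper: show each of $dG_1,dG_2$ is (a scaled) chi-square, use independence of the two halves of the window, and conclude that $Z=dG_1/dG_2$ is Fisher with the reciprocal law for $1/Z$. The one substantive difference is in how $dG_k$ is identified. The paper's proof writes $T_{\sigma,n}=\|\xi_-\|^2/\|\xi_+\|^2+\|\xi_+\|^2/\|\xi_-\|^2$, i.e.\ it equates $dG_1$ with $\|\xi_-\|^2$; this is inconsistent with the decomposition established in Lemma~\ref{A_testmu}, where $dG_1$ and $dG_2$ appear as the \emph{residual} quantities $\sum_{i\in I_k}\|\by_i\|^2-\|\xi_k\|^2$ (up to the scaling by the half-window size), not as the squared sums themselves. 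Your version uses the residual form, correctly identifies the degrees of freedom as $d(n/2-1)$ for each half, and makes explicit the independence argument (each $dG_k$ is a measurable function of the observations in $I_k$ only) and the cancellation of $\sigma^2$ and of the common normalizing constants in the ratio --- all points the paper leaves implicit or gets wrong. So your argument is not just equivalent but a repaired and sharpened version of the paper's; the only cosmetic slip is the factor $n$ rather than $n/2$ in the displayed decomposition of $dG_k$, which is immaterial since it cancels in $Z$.
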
	
	\begin{proof}
		We denote by $  Z = \frac{dG_1}{dG_{2}} $ and $ \frac{1}{Z}=\frac{dG_2}{dG_{1}}   $. Also, following the notation of Lemma \ref{A_testmu} we have that
		\begin{equation}
		T_{\sigma, n} = Z + \frac{1}{Z} = \frac{\|\xi_{-}\|^{2}}{\|\xi_{+}\|^{2}} + \frac{\|\xi_{+}\|^{2}}{\|\xi_{-}\|^{2}}.
		\end{equation}
		By Lemma \ref{A_testmu}, $ \|\xi_{-}^{2}\|, \|\xi_{+} \|^{2}$ follows chi-squared distribution. As a result, $Z $ follows Fisher distribution with $(d_3, d_4)$ degrees and $ \frac{1}{Z} $ follows Fisher distribution with $(d_{4}, d_{3}) $ degrees of freedom. The proof now is complete.
		\end{proof}

\end{appendices}

\end{document}